\definecolor{color1}{RGB}{52,115,177}
\definecolor{color2}{RGB}{123,173,208}
\definecolor{color3}{RGB}{204,220,242}
\definecolor{color4}{RGB}{239,179,101}
\definecolor{color5}{RGB}{226,141,79}
\definecolor{color6}{RGB}{187,44,38}
\newtheorem{theorem}{Theorem}
\newtheorem{proposition}{Proposition}
\title{Understanding and Improving Deep Graph Neural Networks: \\ A Probabilistic Graphical Model Perspective}
\author{
Jiayuan Chen$^1$
\and
Xiang Zhang$^{2}$\and
Yinfei Xu$^{2,*}$\and
Tianli Zhao$^3$ \and
Renjie Xie$^{4,5}$\And
Wei Xu$^{4,5}$
\affiliations
$^1$School of Computer Science and Engineering, Southeast University\\
$^2$School of Information Science and Engineering, Southeast University\\
$^3$Institute of Automation, Chinese Academy of Sciences\\
$^4$National Mobile Communications Research Laboratory, Southeast University\\
$^5$Purple Mountain Laboratories\\
}
\begin{document}

\maketitle

\begin{abstract}
Recently, graph-based models designed for downstream tasks have significantly advanced research on graph neural networks (GNNs). GNN baselines based on neural message-passing mechanisms such as GCN and GAT perform worse as the network deepens. Therefore, numerous GNN variants have been proposed to tackle this performance degradation problem, including many deep GNNs. However, a unified framework is still lacking to connect these existing models and interpret their effectiveness at a high level. In this work, we focus on deep GNNs and propose a novel view for understanding them. We establish a theoretical framework via inference on a probabilistic graphical model. Given the fixed point equation (FPE) derived from the variational inference on the Markov random fields, the deep GNNs, including JKNet, GCNII, DGCN, and the classical GNNs, such as GCN, GAT, and APPNP, can be regarded as different approximations of the FPE. Moreover, given this framework, more accurate approximations of FPE are brought, guiding us to design a more powerful GNN: coupling graph neural network (CoGNet). Extensive experiments are carried out on citation networks and natural language processing downstream tasks. The results demonstrate that the CoGNet outperforms the SOTA models.
\end{abstract}

\section{Introduction}
 Graphs have received increasing attention in recent years, driving the unprecedented development of graph neural networks (GNNs). In particular, neural graph-based models are applied to various downstream tasks such as biology \cite{bio2,bio1}, social analysis \cite{soc2}, and computer vision \cite{stgcn,resgcn,vig}. However, it has been observed that classical GNNs like GCN and GAT achieve the best performance in most downstream tasks at only a two-layer network. With the network depth increasing, the performance drops rapidly. 
 
 Hence, a profusion of work tries to tackle the performance degradation problem. These deep GNN models design various feature propagation processes based on the message-passing mechanism of GNN. For instance, JKNet \cite{jknet} flexibly adjusts the neighbor range of each node by using a layer aggregation network architecture. Another excellent model Deep Adaptive Graph Neural Network decouples transformation and propagation to leverage larger receptive fields. GCNII \cite{gcnii} adds Initial residual and Identity mapping based on GCN, extending GCN to a $k$-th order of polynomial filter. Despite fruitful achievements, these works study GNNs from various perspectives, including the spatial and spectral domains that have garnered the most attention in recent years. Driven by the different interpretations of GNNs, a question arises: Is there a framework that can interpret these deep GNNs uniformly? If such a framework exists, we can leverage it to understand the performance degradation problem of classical GNNs more in-depth and identify the weakness of the existing deep GNNs to facilitate their development.
 
In this paper, we understand graph convolution from a probabilistic perspective and develop a general interpretation framework for deep GNNs.  GNNs and probabilistic graphical models (PGMs) are prominent tools for studying structured data.  \cite{dai} note the iterative propagation scheme of variational inference in PGMs, which provides an intuitively appealing direction for connecting GNNs. Thus, we start from the variational inference of Markov random fields and obtain its iterative solution embedded in the Hilbert space. Inspired by the close connection between GNNs and the first-order Taylor expansion of the iterative solution, we establish a unified framework for representing different GNNs and their deep variants. We then incorporate some popular deep graph networks, such as JKNet, DGCN, and GCNII \cite{jknet,argue,gcnii}, into the developed theoretical framework with the help of high-order Taylor approximation. Furthermore, with the proposed framework, we analyze the performance degradation problem and design a more effective model, the \underline{Co}upling \underline{G}raph Neural \underline{Net}work (CoGNet). Through extensive experiments on three semi-supervised node classification benchmarks and natural language processing downstream tasks, including text classification and multi-hop question answering, we demonstrate that the CoGNet achieves strong performances in both shallow and deep network layers, including multiple SOTA results.

\section{Related Work}
\subsection{Graph Neural Networks}
Graph Neural Network (GNN) is a framework for graph representation learning. The basic GNNs follow a neural message-passing mechanism. They exchange messages between connected nodes and update node features using neural networks. In the past few years, some baseline GNN models, such as GCN, GIN, and GAT  \cite{gcn,xu2018powerful,gat}, have been proposed and achieved great popularity. These models have been successfully applied to many downstream tasks, including recommender systems \cite{rm,lightgcn}, drug analysis \cite{drug1}, community detection \cite{cd1,cd2,cd3}, etc. 

However, these GNN models are known to suffer from performance degradation when deepening the network structures. This phenomenon may be caused by problems such as over-parameterization, over-smoothing, and weak generalization of the model. Many works on improving the performance of deep GNN models have been proposed recently.  Jumping
Knowledge Network \cite{jknet} is one of the earliest deep GNNs, combining the node representations of each layer at the last layer.  Another excellent model DAGNN \cite{dagnn} adaptively incorporates information
from large receptive fields. Recently, DropEdge \cite{dropedge} randomly mutes a certain number of edges during each training epoch to alleviate the over-smoothing issue. Other deep models, such as GCNII \cite{gcnii} and DGCN \cite{argue}, have also achieved good performance. However, these models design their propagations from different perspectives, and a unified language is lacking to describe them.

\subsection{Interpretability of GNNs}
Theoretical interpretability has become a focus of research on graph representation learning. As a generalization of convolution on graph-structured data, graph convolution has inspired many works to explore GNNs from the spectral perspective. \cite{li2018deeper} shows that the propagation process of the GCN is a low-pass filter and attributes performance degradation to the over-smoothing problem. 
\cite{ma2021unified} understands GNNs as graph signal denoising. Graph wavelet neural network \cite{graphwl} uses graph wavelets to replace the Fourier basis in GCN. \cite{Chebyshev,sgc,Cayleynets} design the network aggregation from the perspective of spectral filters. Another line of work analyzes the connections between GNNs and graph isomorphism testing. \cite{xu2018powerful} proves that message-passing GNNs are no more powerful than the Weisfeiler-Lehman (WL) algorithm \cite{wltest} when given discrete information as node features. \cite{khop} shows that the expressive
power of $K$-hop message passing is bounded by 3-WL test.
Most recently, several works have been devoted to generalizing deep learning theory to graph machine learning. \cite{graphntk} uses Neural Tangent Kernel to analyze infinitely wide multi-layer GNNs trained by gradient descent.  \cite{esser2021learning} explores
generalization in GNNs through VC Dimension based generalization error bounds. In this work, we understand GNNs from a novel probabilistic perspective and analyze their intrinsic connections.

\section{GNNs as Embedded Graphical Models}
\label{xxx}
In this section, we establish the connections between the GNN models and the probabilistic inference in graphical models. 
We first introduce the notations used throughout the paper. 

\noindent \textbf{Notations.} Given a weighted undirected Graph $G = (V,E)$ with $n$ nodes $v_i \in V$ and edges $\left(v_i,v_j\right)\in E$, let $\mathbf{A} \in \mathbb{R}^{n \times n}$ denote the adjacency matrix and $\mathbf{D}$ denote the diagonal degree matrix, i.e., $\mathbf{D}_{i i}=\sum_{(v_i, v_j) \in E}\mathbf{A}_{i j}$. Let $\hat{\mathbf{A}} = \left(\mathbf{D}+\mathbf{I}_n\right)^{-\frac{1}{2}}(\mathbf{I}_n+\mathbf{A})(\mathbf{D}+\mathbf{I}_n)^{-\frac{1}{2}}$  denote a normalized variant of the adjacency matrix (with self-loops). We denote the node feature matrix as $\mathbf{X} \in \mathbb{R}^{n \times d}$ for a $d $-dimensional feature vector $\mathbf{x}_i$ associated with each node $v_i$ and use $\mathcal{N}(i)$ to represent the set of neighboring nodes of node $v_i$ (including itself). 

We next introduce message passing in graphical models. Our starting point is structured data modeled with the PGM. Markov random field is a PGM based on an undirected graph. It is especially suitable for handling high-dimensional data with graph structure. With the conditional independence assumptions of graph structure, we can simplify the joint distribution $p$ over some set of discrete random variables $\mathcal{X} = \{x_1,...,x_n\}$, and then solve it by optimization. Considering an undirected Graph $G$, we introduce latent variable ${z}_u$ associated with each node $u$. The input node features $\mathbf{X}\in \mathbb{R}^{n \times d}$ 
is the observed variable. The aim is to find conditional distributions $p\left(\mathbf{z}\mid\mathbf{X}\right)$ over latent variables $\mathbf{z} = \{{z}_1,\cdots,z_n\}$ so that we can make inference.

For a Markov random field with graph $G$, computing the posterior $p$ is a computationally intractable task. Mean-field variational inference, one of the most popular approximate inference techniques, is employed here for approximation:%
\begin{equation}
  p\left({z}_{1}, \cdots, {z}_{n} \mid \mathbf{X}\right) \approx \prod_{i=1}^{n} q_{i}\left({z}_{i}\right).
\end{equation}
By minimizing the free energy between the approximate posterior $q$ and the true posterior $p$, we obtain the optimal distribution $q^{*}$. It's hard to find a direct solution to this optimization problem. \cite{jordan} shows that the above optimization problem satisfies the following fixed point equations:%

\begin{equation*}
\log q^{*}_{i}\left(z_{i}\right)=c+\log \left(\phi\left(z_{i}, x_{i}\right)\right)+ 
\sum_{j \in \mathcal{N}(i)} 
\sum_{z_j} q^{*}_{j}\left(z_{j}\right) \log \left(\psi\left(z_{i}, z_{j}\right) \right) ,
\end{equation*}
where $c$ is a constant, $\psi$ and $\phi$ are non-negative potential functions. The above equation exhibits the relationship between marginals of adjacent nodes, and we can obtain its iterative form solution:
\begin{equation}
\begin{aligned}
\label{fixed}
\log q_{i}^{(l+1)}\left(z_{i}\right)=&c^{(l)}_i+\log \left(\phi\left(z_{i}, x_{i}\right)\right) \\&+
\sum_{j \in \mathcal{N}(i)} \sum_{z_j}
q^{(l)}_{j}\left(z_j\right)\log \left(\psi\left(z_{i}, z_{j}\right) \right),
\end{aligned}
\end{equation}
where $q_i^{*}(z_i)=\underset{l \rightarrow \infty}{\lim} q_{}^{l}(z_i)$. We can abbreviate Eq. \eqref{fixed} as:
\begin{equation}
\label{qtheta}
q_{i}^{(l+1)}\left({z}_{i} \right)=\mathcal{F}_{i} \left({z}_{i},{q^{(l)}_{j}\left(z_j\right)}\right), \quad  {j \in \mathcal{N}(i)}.
\end{equation}
Here $\mathcal{F}_{{i}}(\cdot)$ is a function determined by the potential functions. Note that these potential functions are defined on the clique where node $i$ is located. Eq. \eqref{qtheta} establishes an iterative update formula to aggregate the neighbor information. And \cite{dai} further mapped it to a high-dimensional space with the help of Hilbert space embedding, thus linking the high-dimensional data and the low-dimensional structure. They map distributions $q(z)$ into Hilbert space using some injective functions $\phi_{\alpha}$. As a result, we can obtain the embedding ${\mu}_{z_i}$ in Hilbert space by ${\mu}_{z_i}:=\int_{{z_i}} \phi_{\alpha}(z_i) q_i(z_i) d z_i$.  

\begin{theorem}\cite{smolaembedding}
Given a finite-dimensional feature map $\phi$ that maps $p(x)$ to $\mu$, if $\phi$ is injective, then any function that applies on p(x) is equivalent to computing a corresponding function on $\mu$.
\end{theorem}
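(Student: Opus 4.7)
The plan is to show the statement is essentially a consequence of the definition of injectivity: an injective map admits a left inverse on its image, so any functional on the domain can be pulled back through that inverse to a function on the image.

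First I would fix notation. Let $\mathcal{P}$ be the set of distributions under consideration (so $p \in \mathcal{P}$), let $H$ be the target Hilbert space, and let $\mathcal{M} := \phi(\mathcal{P}) \subseteq H$ denote the image of the embedding. The hypothesis that $\phi$ is injective means exactly that the corestriction $\phi : \mathcal{P} \to \mathcal{M}$ is a bijection. Hence a well-defined inverse $\phi^{-1} : \mathcal{M} \to \mathcal{P}$ exists, satisfying $\phi^{-1}(\phi(p)) = p$ for every $p \in \mathcal{P}$.

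Next, given an arbitrary function $F : \mathcal{P} \to \mathcal{Y}$ (with $\mathcal{Y}$ any target space), I would define the corresponding function on embeddings by
\begin{equation*}
G := F \circ \phi^{-1} : \mathcal{M} \to \mathcal{Y}, \qquad G(\mu) = F(\phi^{-1}(\mu)).
\end{equation*}
For any $p \in \mathcal{P}$ with embedding $\mu = \phi(p)$, one then has $G(\mu) = F(\phi^{-1}(\phi(p))) = F(p)$, which is precisely the claimed equivalence between computing $F$ on $p$ and computing $G$ on $\mu$. If desired, $G$ may be extended to all of $H$ by any rule (e.g.\ constantly zero off $\mathcal{M}$) without affecting the equality on $\mathcal{M}$.

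I do not expect a serious obstacle: once injectivity is in hand, the construction above is forced, and the argument is essentially tautological. The only conceptual remark worth including is that injectivity is the sole ingredient needed -- no regularity of $\phi$ (continuity, linearity, etc.) is required for the equivalence of function values, because the statement is purely about information preservation. Subtleties only arise if one wants $G$ to inherit additional structure (continuity, differentiability, tractability), which is not asserted by the theorem and is therefore outside the scope of the proof.
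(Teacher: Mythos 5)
Your argument is correct, and it is the standard (essentially tautological) justification for this statement: injectivity of $\phi$ yields a bijection onto its image $\mathcal{M}=\phi(\mathcal{P})$, and pulling any functional $F$ back through $\phi^{-1}$ gives the corresponding function $G=F\circ\phi^{-1}$ on embeddings with $G(\mu)=F(p)$. Note that the paper itself supplies no proof --- Theorem~1 is imported verbatim from the cited reference on Hilbert space embeddings of distributions --- so there is nothing in the text to diverge from; your construction is the same one underlying that reference, and your closing caveat (that the theorem asserts only information preservation, not any regularity or tractability of $G$) is a fair and worthwhile observation, since the paper subsequently treats $\tilde{\mathcal{F}}_{\alpha,i}$ as if it were smooth enough to Taylor-expand, which is an additional assumption not delivered by injectivity alone.
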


According to Theorem 1, we can denote the function corresponding to $\mathcal{F}_{{i}}$ in Eq. \eqref{qtheta} after the kernel embedding as $\tilde{\mathcal{F}}_{\alpha}$. For the  recursive expression of $q_{{i}}$ (Eq. \eqref{qtheta}), we have the following iterative formula after Hilbert space embedding:
\begin{equation}
 \label{eqit}
{u}_{i}^{(l)}=\tilde{\mathcal{F}}_{\alpha,i}\left({u}_{j}^{(l-1)}, {j \in \mathcal{N}(i)}\right)  .
\end{equation}
In order to have a more straightforward distinction, we use $\ell$ to denote the number of network layers in graph networks and $l$ to represent the number of iterations.

Following the breakthrough work of \cite{dai}, several papers put forward further discussions. \cite{graphite} find the theoretical connection between message-passing GNN and mean-field variational inference. This inspires us to apply the idea to analyzing GNNs and explore it further. For simplicity of derivation, we start from the assumption that $u_i$ is unidimensional. We denote $\mathbf{U}_{i}\in \mathbb{R}^{m\times1}, m=\left|\mathcal{N}(i)\right|$, as the vector consisting of the neighbourhoods of node $i$:%
\begin{equation}
\label{eq:U}
\mathbf{U}_{i}^{(l)} =
\left[ u_1^{(l)},u_2^{(l)}, ...,u_m^{(l)}\right]^{\top}, \quad m \in \mathcal{N}(i).
\end{equation}
where $u_m^{(l)}$ is the $m$-th entrie of vector $\mathbf{U}_{i}^{(l)}$. Now we can give the Taylor expansion of Eq. \eqref{eqit} at $\mathbf{U}_{i}^{(l)}=\mathbf{0}$ as:

\begin{align}
\label{taylor}
  u_{i}^{(l)} = &\tilde{\mathcal{F}}_{\alpha, {i}}(\mathbf{0})+ \tilde{\mathcal{F}}_{\alpha, {i}}^{\prime}(\mathbf{0}) \mathbf{U}_{i}^{(l-1)}  +\frac{1}{2} {\mathbf{U}_{i}^{\left(l-1\right)^{\top}}} \tilde{\mathcal{F}}_{\alpha, {i}}^{\prime\prime}(\mathbf{0})  \mathbf{U}_{i}^{(l-1)} \nonumber\\&+\cdots+\frac{\tilde{\mathcal{F}}_{\alpha, {i}}^{n}(\mathbf{0})}{n !}\left[\mathbf{U}_{i}^{(l-1)}\right]^{n}+o\left[\mathbf{U}_{i}^{(l-1)}\right]^{n}.
\end{align}

We denote the $n^{th}$ order derivative by $\tilde{\mathcal{F}}_{\alpha, {i}}^{n}$. The value of $\tilde{\mathcal{F}}_{\alpha, {i}}^{n}(\mathbf{0})$ is node-specific. Actually, we can already find the connection between the probabilistic graph model and the GNN through Eq. \eqref{taylor}. The embedded high-dimensional variables and the iterative message-passing mechanism match perfectly with GNN. To make it more understandable, in the following, we briefly analyze the first-order approximation of Eq. \eqref{taylor}:
\begin{equation}
u_{i}^{(l)} \approx \tilde{\mathcal{F}}_{\alpha, {i}}(\mathbf{0})+ \tilde{\mathcal{F}}_{\alpha, {i}}^{\prime}(\mathbf{0}) \mathbf{U}_{i}^{\left(l-1\right)},
\label{first-order}
\end{equation}
where $\mathbf{U}_i^{(l-1)}$ consists of its neighbors and $u_{i}^{(l)}$ corresponds to the embedding of conditional probability $q_{i}$ in the Hilbert space. We can find that Eq. \eqref{first-order} satisfies the recursive message-passing mechanism. $\tilde{\mathcal{F}}_{\alpha, {i}}^{\prime}(\mathbf{0}) \mathbf{U}_{i}^{(l-1)}$ is an inner product term, which aggregates neighbors $u_{j}^{(l-1)}$ (including $u_{i}^{(l-1)}$) after multiplying by the coefficients:
\begin{equation}
\label{eq:8}
u_{i}^{(l)} \approx \tilde{\mathcal{F}}_{\alpha, {i}}(\mathbf{0})+\sum_{j \in \mathcal{N}
{(i)}} f_{ij} \cdot u_j^{(l-1)},
\end{equation}
where $[f_{ij}, j \in \mathcal{N}(i)]^{\top} \triangleq \tilde{\mathcal{F}}_{\alpha, {i}}^{\prime}(\mathbf{0})$. This form is exactly consistent with the neural message-passing mechanism of GNN.

\section{A Unified Framework}
In the previous section, we elaborate message-passing mechanism of GNN from the perspective of inference on PGMs. According to Eq. \eqref{eq:8}, we can find that the vanilla GNN is equivalent to the first-order approximation of Eq. \eqref{taylor}. In this section, we establish a unified framework to interpret existing GNN baselines and some well-designed deep GNN variants. 

 The formula of variational inference after embedding into Hilbert space exhibits a close connection with the mechanism of GNNs.  Eq. \eqref{eqit} aggregates neighborhood variables and updates its own variable, which is the same as the key core of neural message-passing. This implies that different GNN propagation mechanisms are different approximations of the function $\tilde{\mathcal{F}}_{\alpha,i}$. With the help of Taylor expansion Eq. \eqref{taylor}, we can understand this framework more clearly. In the following, we theoretically prove that the propagation mechanisms of some popular GNN variants 
 can be seen as fitting different order approximations of Eq. \eqref{taylor}.

\subsection{Interpreting GCN and SGC}
Graph Convolutional Network (GCN) is a popular baseline GNN inspired by spectral graph convolutions. We show the definition of vanilla graph convolution \cite{gcn} with bias as follows:
\begin{equation}
\label{eq:gcn}
h_{i}^{(\ell)} = \sigma\left(b^{(\ell)}_{i}+\sum_{j \in \mathcal{N}
{(i)}} \frac{w_j^{(\ell)}}{\sqrt{d_id_j}} h_{j}^{(\ell-1)} \right).
\end{equation}

A $k$-layer GCN can aggregate the messages of $k$-hop neighbors. Simplifying Graph Convolutional Network (SGC) removes nonlinearity activation in GCN while doing propagations. We will discuss the effect of nonlinearity activation in Section \ref{diss}. Here we assume that nonlinearity activations do not significantly affect the network and analyze the two similar models, SGC and GCN, together.

As a basic message-passing neural network, GCN naturally satisfies the form of Eq. \eqref{eq:8}. To be specific, $\tilde{\mathcal{F}}_{\alpha, {i}}$ is determined by the potential functions defined on the clique, but since the kernel mapping $\phi_{\alpha}$ is undetermined, the learnable weights $w$ coming with clique-related constants $d_i,d_j$ are used to approximate it. On the other hand, the bias in 
Eq. \eqref{eq:gcn} corresponds to the constant term in  Eq. \eqref{eq:8}. The specific equivalence is as follows:
\begin{equation}
h_{i}^{(\ell)}  \Rightarrow u_{i}^{(l)} , \quad 
   b_{i} \Rightarrow \tilde{\mathcal{F}}_{\alpha, {i}}(\mathbf{0}), \quad \frac{w_j^{(\ell)}}{\sqrt{d_id_j}} \Rightarrow f_{ij}. \nonumber
\end{equation}
The aggregation layer of the GCN is inherently connected to the first-order approximation of Eq. \eqref{taylor}. Through the analysis in Section \ref{pd}, we will find that such a propagation process will deteriorate the performance after the network structure is deepened.

\subsection{Interpreting GAT}

Graph attention network (GAT) \cite{gat} applies a popular attention mechanism to the aggregation layer. It uses attention weights to weigh aggregate neighbor messages. The ways to parameterize the attention weights $\alpha$ can be dot product or feedforward neural network. The layers of the GAT-styled model can be represented as follows: 
\begin{equation}
\begin{aligned}
h^{(\ell)}_{i} = \sigma\left(\sum_{j\in\mathcal{N}(i)}\alpha_{ij}w_jh_j^{(\ell-1)}\right), 
\\
\alpha_{ij} = \text{Softmax}\left(f(h_{i}^{(\ell-1)},h_{j}^{(\ell-1)})\right), 
\end{aligned}
\end{equation}
where $\alpha_{ij}$ denotes the attention weight on neighbor $j \in \mathcal{N}(i)$ when we are aggregating information at node $v_i$, and $f(\cdot)$ denotes a learnable attention weight calculation function.

We notice that the node aggregation process of GCN and GAT is quite similar. Compared with GCN, which uses the degree of nodes $d$ for weighted aggregation, GAT uses attention weights $\alpha$ instead. So its equivalence with Eq. \eqref{eq:8} is:
\begin{equation}
    \alpha_{ij}w_j^{\ell} \approx \left.\dfrac{\partial\tilde{\mathcal{F}}_{\alpha, {i}}(\mathbf{U}_i)}{\partial u_j^{}}\right|_{\mathbf{U}_i=0}.
\end{equation}
The attention mechanism has a greater computational cost, but its attention coefficients can better fit $\tilde{\mathcal{F}}_{\alpha, {i}}$ than GCN. This is because the value of the function $\tilde{\mathcal{F}}_{\alpha, {i}}$ is determined by the node $v_i$ and its neighbors. GAT can learn the  features of the clique according to the node vector and its neighbors, while GCN can only use the node degree. The experimental results in Section \ref{exp} also verified this conclusion. 

Both GCN and GAT are derived based on the first-order Taylor expansion, so they have performance degradation problems. There are several existing works on this issue, and their network architectures can alleviate the performance degradation problem of deep networks to some extent. In the following, we will interpret them from the perspective of PGM representation and analyze why they can achieve better results.

\subsection{Interpreting APPNP and GCNII}
\label{appnp/gcnii}
In the previous section, we elaborated on the classical GNNs under our framework. Intuitively, we have two ways to design a more powerful GNN. One is to find a more precise technique to fit  $\tilde{\mathcal{F}}_{\alpha, {i}}$, and the other is to perform message propagation based on the higher-order expansion of Eq. \eqref{taylor}. Indeed, the propagation process of APPNP and GCNII can be regarded as the generalized form of second-order expansion. To prove this, we first briefly present two models.

APPNP \cite{appnp} is a simple GNN originated from Personalized PageRank. It can be considered as the linear combination of the initial features and the current layer:
\begin{equation}
\mathbf{H}^{(\ell+1)}=(1-\alpha) \hat{\mathbf{A}} \mathbf{H}^{(\ell)}+\alpha \mathbf{H}^{(0)}. 
\end{equation}
GCNII \cite{gcnii} and APPNP explore the improvement of GCN from different perspectives. Still, the feature propagation of GCNII can be regarded as introducing identity mapping and a trainable weight matrix based on APPNP. The network propagation of GCNII is:
\begin{equation}
\begin{aligned}
    \mathbf{H}^{(\ell+1)}=\sigma\left(\left(\left(1-\alpha\right) \hat{\mathbf{A}} \mathbf{H}^{(\ell)}+\alpha\mathbf{H}^{(0)}\right)\right.\\
\qquad  \qquad \qquad \left.\left(\left(1-\beta\right)\mathbf{I}_{n}+\beta\mathbf{W}^{(\ell)}\right)\right).
\end{aligned}
 \label{eq:gcnii}
\end{equation}
The identity mapping in GCNII model helps assure that it performs at least as well as a shallow network. While in the theoretical analysis, we do not need to consider network optimization. In this case, we can reformulate the aggregation formula of GCNII as:
\begin{equation}
\label{gcnii-re}
\mathbf{H}^{(\ell+1)}=\sigma\left(\hat{\mathbf{A}} \mathbf{H}^{(\ell)}\mathbf{W}^{(\ell_{1})}+ \mathbf{H}^{(0)}\mathbf{W}^{(\ell_{2})}\right).
\end{equation}
When we adopt $\beta=0$ in GCNII, we can find out that GCNII reduces to APPNP \cite{appnp}. In other words, APPNP can be seen as a degenerate form of GCNII. Next, we will only analyze the propagation process of GCNII, and it is straightforward to modify the proof to obtain a nearly equivalent result for the APPNP.

\begin{proposition}\label{pro1}
The propagation process of GCNII is equivalent to the second-order expansion of Eq. \eqref{taylor}, where the initial residual corresponds to the approximation of the quadratic term.
\end{proposition}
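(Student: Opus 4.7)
The plan is to expand $\tilde{\mathcal{F}}_{\alpha,i}$ in Eq.~\eqref{taylor} truncated after the quadratic term, collect the result into matrix form over all nodes, and match it against the reformulated GCNII update in Eq.~\eqref{gcnii-re}. The truncation contributes three pieces at node $i$: the constant $\tilde{\mathcal{F}}_{\alpha,i}(\mathbf{0})$, the linear aggregation $\tilde{\mathcal{F}}_{\alpha,i}^{\prime}(\mathbf{0})\mathbf{U}_i^{(l-1)}$, and the bilinear form $\tfrac{1}{2}\mathbf{U}_i^{(l-1)^{\top}}\tilde{\mathcal{F}}_{\alpha,i}^{\prime\prime}(\mathbf{0})\mathbf{U}_i^{(l-1)}$. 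My target is to show that the first two assemble into the propagation block $\hat{\mathbf{A}}\mathbf{H}^{(\ell)}\mathbf{W}^{(\ell_1)}$ and that the bilinear form is what $\mathbf{H}^{(0)}\mathbf{W}^{(\ell_2)}$ approximates.

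For the constant and linear pieces I would reuse verbatim the identification already made in Eq.~\eqref{eq:8} for the GCN case: collecting the coefficients $f_{ij}$ together with the normalization carried by $\hat{\mathbf{A}}$ into a single learnable weight produces $\hat{\mathbf{A}}\mathbf{H}^{(\ell)}\mathbf{W}^{(\ell_1)}$, with the constant bias either written explicitly or swallowed by $\sigma$. This step is routine and directly accounts for the first additive block inside $\sigma(\cdot)$ in Eq.~\eqref{gcnii-re}.

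The substance of the proposition lies in approximating the quadratic term by $\mathbf{H}^{(0)}\mathbf{W}^{(\ell_2)}$. My idea is to apply the first-order recursion (Eq.~\eqref{first-order}) inside the bilinear form: each neighbor $u_j^{(l-1)}$ can itself be expanded in terms of its own neighbors $\mathbf{U}_j^{(l-2)}$, and iterating the substitution back to the observed layer rewrites $\mathbf{U}_i^{(l-1)}$ as a polynomial in the entries of $\mathbf{H}^{(0)}$. Plugging this into the quadratic form yields a higher-degree polynomial in $\mathbf{H}^{(0)}$ whose coefficients depend only on the fixed derivatives $\tilde{\mathcal{F}}^{\prime}$, $\tilde{\mathcal{F}}^{\prime\prime}$ and on powers of $\hat{\mathbf{A}}$. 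Since these coefficients are not analytically accessible, I would absorb them into a single learnable linear map acting on $\mathbf{H}^{(0)}$, which is exactly the initial residual $\mathbf{H}^{(0)}\mathbf{W}^{(\ell_2)}$. Combining the two identifications reproduces Eq.~\eqref{gcnii-re}, and restricting the construction to $\beta=0$ yields a nearly equivalent derivation for APPNP, as already anticipated in the discussion preceding the proposition.

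The main obstacle will be justifying the collapse of the polynomial in $\mathbf{H}^{(0)}$ into a single linear projection $\mathbf{W}^{(\ell_2)}$. That step is not a strict analytic approximation but a modelling identification resting on the expressive capacity of the trainable weight matrix and on treating the unknown higher-order propagation coefficients as absorbed parameters. I expect the cleanest presentation is to phrase the final correspondence up to a learnable reparameterization, paralleling the convention already adopted for the first-order term in the GCN interpretation, rather than attempting an exact algebraic equality.
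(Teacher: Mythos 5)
Your handling of the constant and linear terms matches the paper, but your treatment of the quadratic term takes a genuinely different route. The paper does not unroll the recursion back to the input layer: it splits the bilinear form $\tfrac{1}{2}\mathbf{U}_i^{(l-1)\top}\tilde{\mathcal{F}}_{\alpha,i}^{\prime\prime}(\mathbf{0})\mathbf{U}_i^{(l-1)}$ directly into a self-interaction piece, written as $f_1\bigl(u_i^{(0)}\bigr)$ and identified with the initial residual, plus neighbor pieces $\sum_{j\in\mathcal{N}(i)} f_2\bigl(u_j^{(l-1)}\bigr)$ that are \emph{absorbed into the coefficients $g_{ij}$ of the first-order aggregation} rather than into the residual. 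So in the paper only the self-term feeds $\mathbf{H}^{(0)}\mathbf{W}^{(\ell_2)}$, and the residual stays node-local by construction, which is exactly the shape of the GCNII/APPNP formula. Your alternative --- substituting the first-order recursion into $\mathbf{U}_i^{(l-1)}$ repeatedly and collapsing the resulting polynomial in $\mathbf{H}^{(0)}$ into one learnable map --- does give an independent motivation for why the \emph{initial} features should appear, which the paper's decomposition simply posits. But it buys this at a cost you should flag more sharply: after unrolling, the dependence on $\mathbf{H}^{(0)}$ passes through powers of $\hat{\mathbf{A}}$, i.e.\ through multi-hop neighborhoods, and a node-wise projection $\mathbf{H}^{(0)}\mathbf{W}^{(\ell_2)}$ cannot absorb the graph mixing $\hat{\mathbf{A}}^k$ into a feature-space weight matrix; you are discarding structure, not reparameterizing it. Moreover, by routing the \emph{entire} quadratic term to the residual you lose the paper's point that part of it refines the one-hop aggregation weights. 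Both proofs are ultimately modelling identifications rather than exact algebra (the paper defers details to its appendix), but the paper's decomposition lands on the GCNII form with a smaller leap; if you keep your unrolling argument, restrict it to the diagonal (self) part of $\tilde{\mathcal{F}}_{\alpha,i}^{\prime\prime}(\mathbf{0})$ and fold the off-diagonal part into $g_{ij}$ as the paper does.
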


\begin{proof}
We start with the second-order Taylor expansion:
\begin{equation}
\label{2nd-order}
\begin{aligned}
  u_{i}^{(l)} &\approx \tilde{\mathcal{F}}_{\alpha, {i}}(\mathbf{0})+ \tilde{\mathcal{F}}^{\prime}_{\alpha, {i}}(\mathbf{0}) \mathbf{U}_{i}^{(l-1)}  \\&+\frac{1}{2} \mathbf{U}_{i}^{(l-1)^{\top}}  \tilde{\mathcal{F}}_{\alpha, {i}}^{\prime\prime}(\mathbf{0})  \mathbf{U}_{i}^{(l-1)} +o\left[\mathbf{U}_{i}^{(l-1)}\right]^{2}.
 \end{aligned}
\end{equation}
We cannot directly express the quadratic term in the form of a graph convolution. To circumvent this problem, we approximate it with a linear term of the node features $\mathbf{U}_i$. We know from Eq. \eqref{eq:U} that $\mathbf{U}_i$ is a tensor consisting of the features of node $v_i$ and its neighbors. So we can rewrite the quadratic term as:
\begin{equation}
   \frac{1}{2} \mathbf{U}_{i}^{(l-1)^{\top}}  \tilde{\mathcal{F}}_{\alpha, {i}}^{\prime\prime}(\mathbf{0})  \mathbf{U}_{i}^{(l-1)} = f_1\left({u}_i^{(0)}\right) + \sum_{j\in \mathcal{N}(i)}f_2\left(u^{(l-1)}_j\right),
\end{equation}
where $f_1$ and $f_2$ denote different transformations. Hence Eq. \eqref{2nd-order} is deduced as the following form:
\begin{equation}
\label{eq:13}
u_{i}^{(l)} \approx \tilde{\mathcal{F}}_{\alpha, {i}}(\mathbf{0})+\sum_{j \in \mathcal{N}
{(i)}} g_{ij} \cdot u_j^{(l-1)}+f_1\left({u}_i^{(0)}\right),
\end{equation}
where we only keep the linear terms about $u_i$ and $u_j$, and $g_{ij}$ represent the parameters. We represent Eq. \eqref{2nd-order} as the first-order expansion plus the initial residual, which is consistent with the ideas of GCNII and APPNP. The detailed proof is deferred to the appendix.
\end{proof}
With this connection, one can show that the propagation process of GCNII is equivalent to the approximate form of the second-order expansion of Eq. \eqref{taylor}. This means that our framework based on PGM representation can well explain this deep GNN model.

\subsection{Interpreting JKNet and DGCN}
Jumping Knowledge Network (JKNet) and Decoupled Graph Convolutional Network (DGCN) are well-designed deep GNNs, and we connect their propagation operations in our framework using a similar pattern.

 The JKNet proposed by \cite{jknet} combines all previous representations at the last layer. The specific combination mechanisms can be max-pooling, concatenation, and LSTM-attention. For convenience, as is similar to \cite{ma2021unified}, we omit the non-linear activation $\sigma(\cdot)$. We denote $\mathbf{Z}$ as the final output of the model and take the attention mechanism as an example of combination process for analyzing. The attention-JKNet can be described as:
\begin{equation}
 \mathbf{Z}=\sum_{\ell=1}^L \alpha_{\ell} \hat{\mathbf{A}}^{\ell}\mathbf{X} \mathbf{W}^{\ell},
\end{equation}
 where $\alpha_{\ell}$ are learnable attention weights and $\sum_{\ell=1}^{L}\alpha_{\ell}=1$. DGCN adopts a representation ensemble similar to JKNet, and uses a transformation process in each layer similar to GCNII :
  \begin{equation}
 \mathbf{Z}=\sum_{\ell=1}^L \alpha_{\ell} \hat{\mathbf{A}}^{\ell} \mathbf{X}\left(\beta_{\ell} \mathbf{W}^{(\ell)}+\left(1-\beta_{\ell}\right) \mathbf{I}_{n}\right).
\end{equation}

$\alpha_{\ell}$ and $\beta_{\ell}$ are trainable weights, $\mathbf{I}_{n}$ is the identity mapping. DGCN ensures that the performance of the deep network is at least not worse than the shallow network by adding identity mapping. In fact, its propagation process is the same as JKNet. 

For this kind of operation that ensemble the representations of $k$ layers, it corresponds to the $k$-th expansion of Eq. \eqref{taylor}:
\begin{equation}
    u_i^{(l)} = \sum_{k = 1}^{K} \frac{\tilde{\mathcal{F}}_{\alpha, {i}}^{k}(\mathbf{0})}{k !}\left[\mathbf{U}_{i}^{(l-1)}\right]^{k},
\end{equation}
\begin{equation*}
    \frac{\tilde{\mathcal{F}}_{\alpha, {i}}^{k}(\mathbf{0})}{k !}\left[\mathbf{U}_{i}^{(l-1)}\right]^{k} \Rightarrow \alpha_{k} \hat{\mathbf{A}}^{k}\mathbf{X} \mathbf{W}^{k}.
\end{equation*}
Since the $k$-th order term of the expansion will contain high-dimensional tensors, we do not show the specific correspondence here. But we can know that the $k$-th term in Eq. \eqref{taylor} can aggregate k-hop information, and the coefficient items $\frac{\tilde{\mathcal{F}}_{\alpha, {i}}^{k}(\mathbf{0})}{k !}$ are fitted by the trainable weight matrix.

In section \ref{appnp/gcnii}, we proved that the transformations and identity mappings of GCNII can effectively reduce the error, so DGCN that applies a similar transformations method will theoretically perform better than JKNet. The experimental results also verify our presumptions.


\subsection{Discussion on Non-linearity}
\label{diss}
In the above subsections, we did not pay attention to non-linearity when analyzing these models. Here we discuss the role of activation functions in our framework. Recall what we stated in section \ref{xxx}, when we embed the formula of the obtained iterative solution into the Hilbert space, the mapping function needs to be injective. In \cite{xu2018powerful}, the authors discussed the graph convolution and proved its injectivity. This implies that adding an activation function (ReLU) satisfies our assumption. On the other hand, our network parameters need to fit the function $\tilde{\mathcal{F}}$, and the non-linear activation can help the network better fit the non-linear functions.

\section{On Designing Deep GNNs}
In section 4, we obtained a unified framework that establishes the connection between the GNN models and the Taylor expansions of Eq. \eqref{eqit}. However, the popular message-passing GNNs, such as GCN and GAT, are known to suffer from the performance degradation problem, which limits the deepening of their network structure. In this section, we use our framework to understand the performance degradation problem and explore a methodology to improve GNNs.

\subsection{Performance Degradation Problem}
\label{pd}

Several works study this performance degradation phenomenon from different perspectives, including generalization capability, low pass filtering on graphs, optimization of GNN, etc. Here we use the relationship between the GNN and Eq. \eqref{taylor} to analyze the reasons for the performance degradation of the classical GNNs.

We know that after enough iterations of Eq. \eqref{qtheta}, $q^{l}$ will converge to the optimal solution $q^{*}$. However, the propagation formula of GNN is not identical to Eq. \eqref{taylor} (Hilbert space embedding of Eq. \eqref{qtheta}), but more similar to its first-order approximation. This means each layer of the GNN introduces an extra noise $\epsilon^{(l)}$ when updating $q^{(l)}$:
 \begin{equation}
    \hat{q_{i}}^{(l+1)}\left({z}_{i} \right)+ \epsilon_{i}^{(l+1)} ={F}_{i} \left({z}_{i},\hat{q_{j}}^{(l)}\left(z_j\right)\right), \quad  {j \in \mathcal{N}(i)},
\end{equation}
where $F_i$ denotes the approximate form of Eq. \eqref{qtheta}, and $\hat{q_{i}}$ denotes the iterative result\footnote{Technically, the iteration of the GNN acts on the embedding vector of $q_i$, here we simply replace it with $q_i$.} of the GNNs.
As the network deepens (the number of iterations increases), the convergence rate of $q^{(l)}$ gradually slows down, but the accumulation of errors $\epsilon^{(l)}$ increases. Therefore, the performance of baseline GNNs degrades greatly when stacking more layers. For concreteness, we limit our discussion here. We refer readers to the appendix for a more detailed description.

\subsection{Coupling Graph Neural Network}
\label{cognet}
Based on the established framework, we have some approaches to improve the performance of the GNN model from a high-level view. Using GNN for semi-supervised node classification can be regarded as a marginal distribution estimation of latent variables on PGMs. Thus, we can improve GNN from the perspective of the iterative solution of PGM. A deeper network corresponds to more iterations in Eq. \eqref{eqit}; hence it can obtain better results. But this still requires the propagation process of the network to fit Eq. \eqref{taylor} as closely as possible. The approximation to Eq. \eqref{taylor} will introduce errors, causing performance degradation after the network reaches a certain depth.

Intuitively, Eq. \eqref{taylor} can be considered as the upper bound of the performance that variants of GNN can achieve. However, the computational cost of the propagation process and network structure which completely follows Eq. \eqref{taylor}, is unacceptable. To this end, We need to design a computationally efficient approximation of Eq. \eqref{taylor} with as little error as possible. We propose a novel GNN model, Coupling Graph Neural Network (CoGNet),  based on our established framework. Formally, we define the $\ell$-th layer of CoGNet as:
\begin{equation}
\begin{aligned}
&\mathbf{P}^{(\ell+1)}= \mathcal{G}(\mathbf{H}^{\ell},\mathbf{H}^{\ell-1})(\lambda_{\ell}\mathbf{{W}}_{\ell}+(1-\lambda_{\ell})\mathbf{I}_{n}),\\
&\mathcal{G}(\mathbf{H}^{\ell},\mathbf{H}^{\ell-1}) = \gamma_{\ell} \hat{\mathbf{A}} \mathbf{H}^{(\ell)}+(1-\gamma_{\ell})\mathbf{H}^{(\ell-1)},
\end{aligned}
\end{equation}
where $\lambda$ and $\gamma$ are layer-specific learnable weights, and $\mathbf{H}^{(\ell+1)} = \text{ReLU}(\mathbf{P}^{(\ell+1)})$. 

In fact, the propagation process of CoGNet is equivalent to the second-order Taylor expansion of Eq. \eqref{eqit}, but it is more accurate than GCNII. See the appendix for more detailed proof. We use the coupling of the two representations $\mathcal{G}(\mathbf{H}^{\ell},\mathbf{H}^{\ell-1})$ as an approximation to Eq. \eqref{taylor}, which reduces computational cost with small approximation errors. Note that in deep network layers, we use the initial representation as the coupling term. On the other hand, the learnable weights $\lambda$ and $\gamma$ enable the network to fit better the propagation process of Eq. \eqref{taylor}. In addition, following the idea of GCNII, we also introduce identity mapping to enhance the deep network performance further. 

The propagation process of CoGNet, compared with the vanilla GCN, introduces additional storage costs. Therefore, we follow the idea of the Reversible Transformer \cite{reformer} and design reversible coupling, which recovers the representation of the previous layer in the backward pass without the need for checkpoints. For the large graph input, the reversible coupling can effectively reduce the model's memory requirements.

\section{Experiment}
\label{exp}
In this section, we conduct extensive semi-supervised node classification experiments to evaluate CoGNet. We test the performance of CoGNet  on the citation networks and natural language processing (NLP) downstream tasks.

\subsection{Citation Networks}
 \subsubsection{Dataset and Baselines}  We use three standard benchmark citation datasets:  Cora, Citeseer, and Pubmed  for semi-supervised node classification. Their statistics are summarized in the appendix. We use the same fixed training/validation/testing split as \cite{gcn} on three datasets. We evaluate our model against the following baselines: 
 \begin{itemize}
     \item Graph convolutions: Chebyshev \cite{Chebyshev}, GCN \cite{gcn}, GAT \cite{gat}, SGC \cite{sgc}, APPNP \cite{appnp}, Graph U-Net \cite{unet}.
     \item Deep GNNs: JKNet \cite{jknet}, GCNII \cite{gcnii}, DAGNN \cite{dagnn}, DGCN \cite{argue}.
     \item Regularized GNN: Dropedge \cite{dropedge}, GraphMix \cite{graphmix}, GRAND \cite{grand}. Same as \cite{grand}, we report the results of these methods with GCN as the backbone.
 \end{itemize}
{We use CoGNet-S to denote a shallow CoGNet with less than 4 layers.} We further propose the CoGNet++ model, which applies two strategies, Consistency Regularization \cite{grand} and Dropedge \cite{dropedge}, on CoGNet.

\subsubsection{Experiment Results}
 We summarize the performance on citation datasets in Table \ref{tab:c}. We run CoGNet 100 times and report the mean and standard deviation of these 100 runs. 
 
 The upper part of Table \ref{tab:c} shows the variants of GNN, which are all shallow models. We can observe that CoGNet-S outperforms most baseline models. Specifically, the accuracy of the CoGNet compared with GCN and GAT has been greatly improved. In addition, we also compare the performance of popular deep GNN models, such as JKNet, GCNII, DAGNN, etc. We observe CoGNet outperforms all baseline deep GNN models. On the other hand, the deep CoGNet has a significant improvement over its shallow model, indicating that it can effectively alleviate the performance degradation problem of GNNs.

 To better compare the performance of CoGNet and other deep GNNs, we plot the performance of the deep models with various numbers of layers in Fig. \ref{fig:1}. We can observe that for the same number of network layers, CoGNet performs best in most cases. This result verifies the superiority of the propagation process of our model. It is worth noting that for the performance of shallow layers, models such as JKNet and GCNII have no advantage over GCN, but CoGNet can achieve better results than them in shallow networks. 
 
 We also apply the popular regularization strategy on graph-based models to our model, proposing CoGNet++. We can observe that it achieves state-of-the-art performance on Cora and Pubmed datasets, and it also achieves competitive results on the Citeseer dataset.
 
 
 
\begin{table}[t]
    \centering
    \begin{tabular}{c|ccc}
    \toprule
      Model   & Cora & Citeseer & Pubmed  \\
      \midrule
      Chebyshev & 81.2 & 69.8 & 74.4 \\
      GCN & 81.4 & 70.9 & 79.1 \\
      GAT & 83.3 & 72.6 & 78.5 \\
      SGC & 81.0 & 71.9 & 78.9 \\
      APPNP & 83.3 & 71.8 & 80.1 \\
      Graph U-Net & 84.4 & 73.2 & 79.6 \\
      \midrule
      JKNet & 81.1 & 69.8 & 78.1 \\
      GCNII & 85.5 & 73.4 & 80.2 \\
      DAGNN & 84.4 & 73.3 & 80.5 \\
      DGCN & 84.8 & 72.7& 80.0 \\
      \midrule
      Dropedge & 82.8 & 72.3 & 79.6 \\
      GraphMix &83.9& 74.5& 81.0\\
      GRAND & 85.4& $\mathbf{75.4}$&82.7\\
      \midrule
      CoGNet-S & $84.4 \pm0.7$ & $72.4\pm0.6$ & $79.6\pm0.9$  \\ 
      CoGNet & $85.6\pm0.5$ & $73.7\pm0.7$ & $80.5\pm0.5$  \\
      CoGNet++ & $\mathbf{86.5\pm 0.3} $ & $75.2\pm0.4$ & $\mathbf{82.9\pm0.4}$ \\
      \bottomrule
    \end{tabular}
     \caption{Summary of semi-supervised classification accuracy results on Cora, Citeseer, and Pubmed.}
    \label{tab:c}
\end{table}

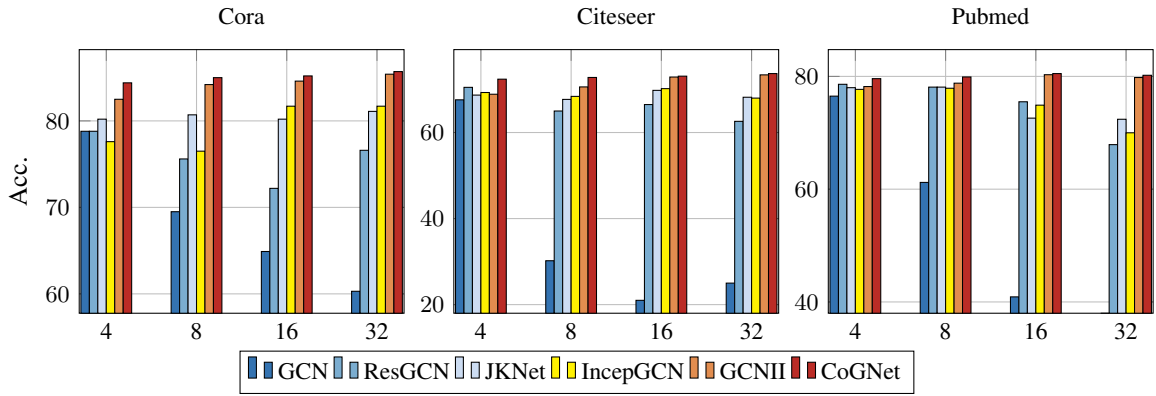
\begin{figure*}
\pgfplotsset{
tick label style={font=\footnotesize},
label style={font=\footnotesize},
legend style={font=\footnotesize},
}
\pgfplotsset{width=5.9 cm,compat=1.3}
\begin{center}
\begin{tikzpicture}
\begin{axis}[
title = \footnotesize{Cora},
grid = major,
width = 5.9cm,
ylabel=Acc.,
ybar=0pt,
symbolic x coords={4,8,16,32},
xtick=data,
bar width=3.2pt,
legend columns=-1,
legend entries={GCN,ResGCN,JKNet,IncepGCN,GCNII,CoGNet},
legend to name = named,
xtick align = inside,
]
\addplot[fill=color1] coordinates {
(4,78.8) (8,69.5)
(16,64.9) (32,60.3) 
};
\addplot[fill=color2] coordinates {
(4,78.8) (8,75.6)
(16,72.2) (32,76.6) 
};
\addplot[fill=color3] coordinates {
(4,80.2) (8,80.7)
(16,80.2) (32,81.1) 
};
\addplot[fill=yellow] coordinates {
(4,77.6) (8,76.5)
(16,81.7) (32,81.7) 
};
\addplot[fill=color5] coordinates {
(4,82.5) (8,84.2)
(16,84.6) (32,85.4) 
};
\addplot[fill=color6] coordinates {
(4,84.4) (8,85.0)
(16,85.2) (32,85.7) 
};
\end{axis}
\end{tikzpicture} 
\begin{tikzpicture}
\begin{axis}[
title = \footnotesize{Citeseer},
width = 5.9cm,
grid=major,
ybar=0pt,
symbolic x coords={4,8,16,32},
xtick=data,
bar width=3.2pt,
ymin=18,
xtick align = inside,
]
\addplot[fill=color1] coordinates {
(4,67.6) (8,30.2)
(16,21) (32,25) 
};
\addplot[fill=color2] coordinates {
(4,70.5) (8,65)
(16,66.5) (32,62.6) 
};
\addplot[fill=color3] coordinates {
(4,68.7) (8,67.7)
(16,69.8) (32,68.2) 
};
\addplot[fill=yellow] coordinates {
(4,69.3) (8,68.4)
(16,70.2) (32,68) 
};
\addplot[fill=color5] coordinates {
(4,68.9) (8,70.6)
(16,72.9) (32,73.4) 
};
\addplot[fill=color6] coordinates {
(4,72.4) (8,72.8)
(16,73.1) (32,73.7) 
};
\end{axis}
\end{tikzpicture} 
\begin{tikzpicture}
\begin{axis}[
title = \footnotesize{Pubmed},
width = 5.9cm,
grid=major,
ybar=0pt,
symbolic x coords={4,8,16,32},
xtick=data,
bar width=3.2pt,
ymin=38,
xtick align = inside,
]
\addplot[fill=color1] coordinates {
(4,76.5) (8,61.2)
(16,40.9) (32,38) 
};
\addplot[fill=color2] coordinates {
(4,78.6) (8,78.1)
(16,75.5) (32,67.9) 
};
\addplot[fill=color3] coordinates {
(4,78.0) (8,78.1)
(16,72.6) (32,72.4) 
};
\addplot[fill=yellow] coordinates {
(4,77.7) (8,77.9)
(16,74.9) (32,70) 
};
\addplot[fill=color5] coordinates {
(4,78.2) (8,78.8)
(16,80.3) (32,79.8) 
};
\addplot[fill=color6] coordinates {
(4,79.6) (8,79.9)
(16,80.5) (32,80.2) 
};
\end{axis}
\end{tikzpicture} \\
\ref{named}
\end{center}
\caption{Semi-supervised node classification performance with various depths from 4 to 32.}
\label{fig:1}
\end{figure*}

\subsection{NLP Tasks}
Many NLP problems can be naturally expressed as graph structures, such as constructing sentences as graphs with words as nodes and constructing text as graphs composed of paragraphs and entities. 
Therefore, neural graph-based models have emerged in recent years and have received increasing attention. However, shallow networks limit their performance. For example, in multi-hop Question Answering (QA), the two-layer graph convolutional network makes the nodes only access two-hop information and invisible for long-range knowledge. We only need to replace the GNN in the model \cite{textgcn,dfgn} with the CoGNet, which makes the network acquire a wider node perception field while deepening network layers and thus reach better results. We conduct experiments to test the performance of CoGNet on two NLP tasks, including text classification and multi-hop QA. Specific parameters of the experiment and more detailed illustrations refer to the appendix.

\subsubsection{Datasets}
To thoroughly compare our model with the GCN, we test it on five widely used datasets of text classification: 20NG, MR, Ohsumed, R8, and R52. Their statistics are summarized in the appendix. HotpotQA is a dataset for explainable multi-hop QA. 

\begin{table}[h]
    \centering
    \begin{tabular}{l|ccccc}
    \toprule
      Model   & 20NG & R8 &R52&Ohsumed&MR  \\
      \midrule
      TextGCN  & 86.3&97.1&93.5&68.4&76.8\\
     +SGC& \textbf{88.5}& 97.2 & 94.0 & 68.5&  75.9 \\
      +CoGNet& {88.3}& \textbf{97.4}& \textbf{94.1}& \textbf{69.1}&  \textbf{77.2} \\
      \bottomrule
    \end{tabular}
    \caption{Summary of test accuracy on text classification.}
    \label{tab:nlp}
\end{table}

\subsubsection{Text Classification}
Text classification is a fundamental task in NLP. \cite{textgcn} constructs a corpus-level heterogeneous graph containing document nodes and word nodes and uses a 2-layer GCN for classification. The results in Table \ref{tab:nlp} demonstrate a significant improvement of CoGNet over GCN and SGC in text classification.

\subsubsection{Multi-hop QA}
Multi-hop text-based QA is a popular topic in NLP. Many models explicitly find the inference path of the question with the help of graphs. For this case, multi-layer GNNs can obtain multi-hop information to assist inference. DFGN \cite{dfgn} constructs a graph of the entities mentioned in the query and then dynamically combines them with the help of a 
GAT. The two-layer GAT in DFGN can only reach two hops of information. Deepening the network depth can obtain multi-hop information, but the accuracy of downstream tasks is affected by the deep GNN. Table \ref{tab:qa} reveals the test performance on the Hotpot QA dataset. We can observe that GCN has the worst performance, while shallow CoGNet (CoGNet-S) has a significant improvement over DFGN (GAT). On the other hand, deepening the CoGNet can achieve better results.

\begin{table}[t]
\centering
\begin{tabular}{l|cccc}
\toprule Dataset & Model &  Joint EM & Joint F$_{1}$ \\
\midrule
 \multirow{4}{*}{HotpotQA} & DFGN & 33.62& 59.82 \\
 & GCN & 33.60& 59.70 \\
 & CoGNet-S &{33.75} &{60.11} \\
  & CoGNet &\textbf{34.12} &\textbf{60.15} \\
\bottomrule
\end{tabular}
\caption{Summary of test performance on multi-hop QA.}
\label{tab:qa}
\end{table}
\section{Conclusion and Future Work}
In this paper, we developed a unified theoretical framework to understand the GNN baselines as well as various deep GNN models using a graphical model representation. Specifically, we obtained an iterative solution of variational inference on Markov random fields, and the propagation operation of GNNs can be represented as approximate forms of it. We also proposed a theoretically motivated and powerful GNN which performs well on both shallow and deep network layers. An interesting direction for future work is to establish the connection between the approximate sampling methods, and the graph neural network to pursue a faster and more powerful sample-based GNN \cite{fastgcn,sage,hasanzadeh2020bayesian}. To complete the picture, understanding and improving the general GNN with the help of other variational methods would also be interesting.

\bibliographystyle{named}
\bibliography{ijcai23}

\end{document}